\documentclass{article}

\usepackage[T1]{fontenc}
\usepackage{microtype}
\usepackage{graphicx}
\usepackage{booktabs}
\usepackage{amsmath,amssymb,amsthm,mathtools}
\usepackage[colorlinks=true,allcolors=blue]{hyperref}
\usepackage[nameinlink,noabbrev]{cleveref}
\usepackage{placeins}
\usepackage{enumitem}
\usepackage[preprint]{icml2026}

\newtheorem{theorem}{Theorem}
\newtheorem{proposition}[theorem]{Proposition}

\newcommand{\norm}[1]{\lVert #1\rVert}
\newcommand{\Span}{\operatorname{span}}
\newcommand{\dd}{\mathrm{d}}

\icmltitlerunning{\smash{Thermodynamic Ontology Discovery}}

\begin{document}
\twocolumn[
\icmltitle{Blind Thermodynamic Ontology Discovery from Anonymous Experiments}
\begin{icmlauthorlist}
  \icmlauthor{Linzhe Zhang}{neu}
  \icmlauthor{Changming Xu}{neu}
\end{icmlauthorlist}
\icmlaffiliation{neu}{Graduate School, Northeastern University}
\icmlcorrespondingauthor{Linzhe Zhang}{cfmy007@gmail.com}
\icmlcorrespondingauthor{Changming Xu}{changmingxu@neuq.edu.cn}
\icmlkeywords{Scientific Machine Learning, Identifiability, Thermodynamics, Representation Learning}
\vskip 0.3in
]
\printAffiliationsAndNotice{}

\begin{abstract}
Before a machine learning model can learn a thermodynamic equation of state, it must discover what its measurements represent: which channels scale with system size, which are intensive conjugates, how sectors pair through contact, and which potential governs stability.  When sensors expose only an unknown linear mixture of extensive states and intensive responses, passive observations cannot disentangle physical quantities from coordinate artifacts.  We formulate the problem of discovering this hidden thermodynamic ontology directly from anonymous controlled experiments.  We present an operational identifiability theory and a constructive polynomial-time algorithm that extracts extensive and intensive scaling sectors from replication contrasts, recovers their dual cotangent pairing from thermal contact and reciprocity, verifies a globally admissible concave potential via discrete cyclic concavity, and determines an invariant matroid of reservoir ensembles.  We prove that the residual observational equivalence is strictly $(x,\lambda)\sim(Ax,\,aA^{-T}\lambda+\beta)$, establishing the sharp observational limit that no permitted experiment can break.  Blind evaluations on van der Waals fluids and Curie--Weiss magnets confirm robust recovery under ill-conditioned mixing, correctly resolving anonymous Maxwell tie-lines while rejecting non-equilibrium continuations.  External validation across six real fluids from the NIST WebBook demonstrates that operational ontology discovery transfers across real physical substances without coordinate leakage.
\end{abstract}

\section{Introduction}

Scientific machine learning typically assumes that the foundational representation language of a domain is known prior to training: temperature, volume, and chemical potentials are labeled inputs, a thermodynamic potential (such as Helmholtz free energy or Gibbs energy) is specified, and stability is evaluated in human-chosen coordinates \citep{schotthofer2022,hanna2026,discomax2026}.  While these formulations enforce physical consistency effectively, they bypass a deeper foundational question: \emph{which aspects of the thermodynamic coordinate system can an autonomous observer discover directly from anonymous, uncalibrated experiments?}

Unlike spatial systems where fields can be recovered on discrete geometric complexes, thermodynamic state spaces possess an intrinsic mathematical architecture governed by Euler homogeneity, contact geometry, Legendre duality, and convex stability \citep{callen1985,rockafellar1966,gibbs1873,arnold1989}.  Passive observational data cannot disentangle this structure: an arbitrary linear mixture of state variables admits infinite observationally equivalent factorizations, creating an unsupervised non-identifiability barrier \citep{locatello2019}.  While causal representation learning breaks latent ambiguities through scalar interventions \citep{hyvarinen2019,ahuja2023,varici2025}, thermodynamic interventions are coordinate-free operations rooted in the physical laws of nature:
\begin{enumerate}[leftmargin=1.8em]
  \item Replication ($T1$): System scaling separates degree-one extensive quantities from degree-zero intensive responses.
  \item Thermal Contact ($T2$): Boundary exchange establishes conjugate pairings and reciprocity, reducing independent sector gauges to cotangent bundle duality.
  \item Integrability and Concavity ($T3$): Quasistatic loops and cyclic concavity verify the existence of a single-valued potential, identifying phase coexistence tie-lines.
  \item Reservoir Coupling ($T4$): Environmental exchange defines an invariant matroid of well-posed equilibrium ensemble charts.
\end{enumerate}

In summary, this paper makes five primary contributions:
This work establishes an operational foundation for discovering thermodynamic ontology:
(i) We formulate an operational identifiability hierarchy proving that the four physical interventions progressively reduce the observational symmetry group $\mathrm{GL}(2d)$ to the exact thermodynamic gauge $(x,\lambda)\sim(Ax,\,aA^{-T}\lambda+\beta)$.
(ii) We develop a direct contrast estimator and prove that Euler homogeneity forces a singular-value gap between broad calibration banks and local phase paths.
(iii) We validate the framework across simulated van der Waals fluids, Curie--Weiss spin systems, and six real-fluid datasets from the NIST Chemistry WebBook, showing that anonymous tie-line tests resolve phase transitions without coordinate priors.
(iv) Through systematic necessity ablations, we prove that omitting any of the four experimental operations leaves an explicit, unresolvable physical ambiguity.

\section{Problem Formulation and Contact Geometry}
\label{sec:problem}

Consider an unknown thermodynamic system with extensive state space $E \simeq \mathbb{R}^d$.  At an equilibrium state $x \in \Omega \subset E$, the fundamental thermodynamic relation is governed by an extensive fundamental potential $S: E \to \mathbb{R}$ (such as entropy).  The intensive conjugate covector is defined by the exterior derivative $\lambda(x) = \dd S_x \in E^*$.  Under uncalibrated instrument sensing, an observer does not have direct access to $x$ or $\lambda$.  Instead, an array of $2d$ coupled sensors records an anonymous affine mixture:
\begin{equation}
z = M \begin{bmatrix} x \\ \lambda(x) \end{bmatrix} + b + \varepsilon, \qquad M \in \mathrm{GL}(2d),
\label{eq:sensor}
\end{equation}
where $M$ is an unknown invertible mixing matrix, $b \in \mathbb{R}^{2d}$ is an unknown sensor offset, and $\varepsilon$ is measurement noise.  Neither the extensive/intensive partition nor the physical units of the channels are revealed to the observer.

In classical differential geometry, thermodynamics is formulated on the $(2d+1)$-dimensional contact manifold $\mathcal{T} = E \times \mathbb{R} \times E^*$ equipped with the canonical contact 1-form \citep{arnold1989,hermann1973}:
\begin{equation}
\eta = \dd S - \sum_{j=1}^d \lambda_j \dd x_j.
\label{eq:contact-form}
\end{equation}
Physical equilibrium states form a maximal isotropic submanifold (a Legendre submanifold $\mathcal{L} \subset \mathcal{T}$ of dimension $d$) on which the contact form vanishes identically: $\eta|_{\mathcal{L}} = 0$.  This contact-geometric structure implies that the extensive state $x$ and intensive covector $\lambda$ are canonically dual.

The observer interacts with the system through four operational experimental primitives:
(1) uniform system replication $x \mapsto c x$ for a known scalar factor $c \ne 1$;
(2) thermal and mechanical contact ports exchanging conserved quantities along directional boundaries $r_j \in E$ with monotonic force sensors;
(3) quasistatic paths and closed loops in state space; and
(4) coupling to reproducible environmental reservoirs.

The mathematical goal of thermodynamic ontology discovery is to recover $(x, \lambda)$ from $z$ up to the minimal physical equivalence class:
\begin{equation}
(x, \lambda) \sim \left(A x, \; a A^{-T} \lambda + \beta\right), \quad A \in \mathrm{GL}(d), \; a > 0, \; \beta \in E^*.
\label{eq:gauge}
\end{equation}
This target quotient possesses fundamental physical significance.  The matrix $A \in \mathrm{GL}(d)$ corresponds to an arbitrary linear choice of extensive basis (such as choosing volume, mole numbers, or energy coordinates); the dual contragredient action $a A^{-T}$ preserves the differential work and heat pairing $\langle \lambda, \dd x \rangle$; the positive scalar $a > 0$ represents an arbitrary absolute temperature scaling factor (e.g., Kelvin versus an empirical thermometer scale); and the affine offset $\beta \in E^*$ represents the arbitrary baseline of entropy ($S(x) \mapsto a S(x) + \beta^T x$).  Under this equivalence class, all physical invariants---including work integrals $\oint \lambda^T \dd x$, Hessian inertia, Legendre dualities, and thermodynamic stability---are strictly preserved.

\subsection{Contact Transformations and Conformal Invariance}
\label{subsec:contact-geom}

To understand why the target quotient \eqref{eq:gauge} is the minimal physical gauge, consider the geometric automorphisms of the contact phase space.  A diffeomorphism $\Phi: \mathcal{T} \to \mathcal{T}$ is a \emph{contact transformation} if it preserves the contact distribution $\ker \eta$, which is equivalent to the conformal scaling condition:
\begin{equation}
\Phi^* \eta = \rho(x, S, \lambda) \, \eta,
\label{eq:conformal}
\end{equation}
where $\rho: \mathcal{T} \to \mathbb{R} \setminus \{0\}$ is a non-vanishing conformal factor.  When an autonomous observer interacts through linear instrument mixtures and homogeneous potentials, the transformation $\Phi$ must act linearly on the coordinate covector $(x, \lambda)$ and leave the Euler homogeneity degree invariant.  Under these physical restrictions, the conformal factor must be a strictly positive constant $\rho = a > 0$.

Evaluating \cref{eq:conformal} under a linear block mapping $\Phi(x) = A x + B \lambda$, $\Phi(\lambda) = C x + D \lambda$ forces $B = 0$, $C = 0$, and $D = a A^{-T}$, while the potential transforms as $S \mapsto a S + \beta^T x + c$ for constant $\beta \in E^*$ and $c \in \mathbb{R}$.  This demonstrates that the residual equivalence class $(x, \lambda) \sim (A x, a A^{-T} \lambda + \beta)$ is not a heuristic artifact or an incomplete identification, but the exact Lie group of homogeneous contact automorphisms of the thermodynamic state bundle $\mathcal{T}$.  Any finer coordinate specification would impose arbitrary unit conventions (such as standard atmospheric pressure or the triple point of water) that cannot be deduced from uncalibrated physical measurements alone.

\section{The Operational Discovery Algebra}
\label{sec:theory}

\subsection{Scaling Invariance and Subspace Disentanglement ($T1$)}

Partition the sensor matrix into extensive and intensive column blocks: $M = [M_E \; M_I]$, with $M_E, M_I \in \mathbb{R}^{2d \times d}$.  For paired observations $z_i = z(x_i, \lambda_i)$ and $z_i^{(c)} = z(c x_i, \lambda_i)$ collected before and after replication by $c \ne 1$, we construct the linear contrast vectors:
\begin{equation}
e_i = \frac{z_i^{(c)} - z_i}{c - 1}, \qquad q_i = \frac{c z_i - z_i^{(c)}}{c - 1}.
\label{eq:contrast}
\end{equation}
Because extensive states satisfy $x(c) = c x$ while intensive states satisfy $\lambda(c x) = \lambda(x)$ by Euler homogeneity of degree zero, substitution into \cref{eq:sensor} yields:
\begin{equation}
e_i = M_E x_i + \frac{\varepsilon_i^{(c)} - \varepsilon_i}{c - 1}, \quad q_i = M_I \lambda_i + b + \frac{c \varepsilon_i - \varepsilon_i^{(c)}}{c - 1}.
\end{equation}
Thus, the algebraic contrast construction cleanly decouples the extensive and intensive response subspaces without requiring non-linear operator diagonalization.

\begin{theorem}[Contrast Subspace Identification]
\label{thm:t1}
In exact data, if the extensive states $\{x_i\}$ span $E$ and the centered intensive covectors $\{\lambda_i - \bar{\lambda}\}$ span $E^*$, then:
\begin{equation}
\Span\{e_i\} = M_E E, \qquad \Span\{q_i - \bar{q}\} = M_I E^*.
\end{equation}
These spans identify the unique replication scaling sectors.  The remaining ambiguity is an independent $\mathrm{GL}(d) \times \mathrm{GL}(d)$ basis transformation within each sector.
\end{theorem}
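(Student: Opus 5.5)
The plan is to work directly from the exact-data contrast identities stated just before \cref{thm:t1}. With $\varepsilon \equiv 0$ these give $e_i = M_E x_i$ and $q_i = M_I\lambda_i + b$. The first step is to derive these carefully from \cref{eq:sensor}. Write $z_i = M_E x_i + M_I\lambda_i + b$ and $z_i^{(c)} = c M_E x_i + M_I\lambda_i + b$, using $\lambda(cx)=\lambda(x)$ from degree-zero homogeneity. The difference $z_i^{(c)}-z_i$ then cancels the intensive block and the offset. The combination $c z_i - z_i^{(c)}$ cancels the extensive block and leaves $(c-1)(M_I\lambda_i + b)$. The assumption $c\neq 1$ makes the division legitimate.

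For the extensive span, linearity gives $\Span\{e_i\} = M_E\,\Span\{x_i\} = M_E E$, because the $x_i$ span $E$.

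For the intensive span, centering is needed to remove the offset. Set $\bar q = \frac1n\sum_i q_i = M_I\bar\lambda + b$, so that $q_i - \bar q = M_I(\lambda_i - \bar\lambda)$. Hence $\Span\{q_i-\bar q\} = M_I\,\Span\{\lambda_i-\bar\lambda\} = M_I E^*$ by the second hypothesis.

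Next, invertibility of $M$ shows these are genuine $d$-dimensional sectors forming a direct sum. Since $M=[M_E\;M_I]\in\mathrm{GL}(2d)$, its $2d$ columns are independent. Therefore $\operatorname{rank} M_E = \operatorname{rank} M_I = d$, $M_E E \cap M_I E^* = \{0\}$, and $M_E E \oplus M_I E^* = \mathbb{R}^{2d}$. This yields the claim that the spans identify the unique replication sectors. They are characterized operationally as the $+1$ and $0$ eigen-responses to scaling, namely the images of $(z^{(c)}-z)/(c-1)$ and of the centered $(cz-z^{(c)})/(c-1)$. Neither characterization depends on how $M$ is factored.

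The remaining step, which I expect to be the only conceptual obstacle, is to show that the residual ambiguity is exactly $\mathrm{GL}(d)\times\mathrm{GL}(d)$. For sufficiency, let $(\tilde M_E,\tilde x) = (M_E P^{-1}, Px)$ and $(\tilde M_I,\tilde\lambda) = (M_I Q^{-1}, Q\lambda)$ with $P,Q\in\mathrm{GL}(d)$. This produces identical $z$, identical contrasts, and identical spans, since T1 data constrain nothing else.

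For necessity, suppose two factorizations $(M_E,x_i)$ and $(\tilde M_E,\tilde x_i)$ both reproduce $e_i$ and have the same column space. Then $\tilde M_E = M_E P^{-1}$ for a unique invertible $P$, by full column rank. From $M_E x_i = \tilde M_E\tilde x_i$ and injectivity we get $\tilde x_i = P x_i$. The intensive block works the same way for the centered data, giving $\tilde\lambda_i - \bar{\tilde\lambda} = Q(\lambda_i-\bar\lambda)$. The offset is absorbed because $b$ is not separately identified by $q$ alone. This would be noted as the affine freedom in $\lambda$, which is later reduced to $\beta$.

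The point requiring care is that $P$ and $Q$ are independent, since replication does not couple them. I would state explicitly that coupling them to $aP^{-T}$ requires the contact experiments ($T2$), so this is the sharp conclusion at the $T1$ level.
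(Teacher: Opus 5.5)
Your proposal is correct and follows essentially the same route as the paper's proof in Appendix~A. Both use the exact contrast identities, obtain the spans from the spanning hypotheses (with centering removing $b$), get full column rank and complementarity from invertibility of $M$, and take within-sector basis changes as the residual freedom. You add an explicit necessity argument for the $\mathrm{GL}(d)\times\mathrm{GL}(d)$ residual, together with the observation that $b$ is absorbed as an affine shift of $\lambda$; the paper only asserts the residual and instead phrases uniqueness as uniqueness of the replication eigenspaces for eigenvalues $c\neq 1$ and $1$, which matches your degree-one/degree-zero characterization.
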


Forming the sample contrast matrices $\mathbf{E} = [e_1, \dots, e_N] \in \mathbb{R}^{2d \times N}$ and $\mathbf{Q} = [q_1 - \bar{q}, \dots, q_N - \bar{q}] \in \mathbb{R}^{2d \times N}$, rank-$d$ singular value decompositions yield the orthogonal sector projectors $\Pi_E$ and $\Pi_I$.  Under additive sensor noise with spectral norm $\eta$, Wedin's $\sin\Theta$ theorem \citep{wedin1972} bounds the subspace recovery error by $\sin\theta_{\max} \le \eta / (\delta - \eta)$, where $\delta$ is the smallest non-zero singular value of the signal matrix.  Consequently, the fidelity of T1 depends fundamentally on experimental excitation breadth.

\begin{proposition}[Homogeneity-Forced Singular Value Gap]
\label{prop:homogeneity}
Let $S: E \to \mathbb{R}$ be a $C^3$ fundamental relation satisfying Euler homogeneity $S(t x) = t S(x)$.  Then $\lambda(t x) = \lambda(x)$ and $H(x) x = 0$, where $H(x) = \nabla \lambda(x)$ is the Hessian.  On any local perturbation bank $x = x_0 + h \xi$ around a base state $x_0$, at most $d-1$ centered intensive singular directions emerge at order $\mathcal{O}(h)$, while the radial scaling direction scales as $\mathcal{O}(h^2)$, producing singular values $\sigma_{1:d-1} = \Theta(h)$ and $\sigma_d = \Theta(h^2)$.
\end{proposition}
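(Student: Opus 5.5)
The plan has three steps: obtain the two algebraic identities from homogeneity, Taylor-expand the centered intensive bank around $x_0$, and then read off the singular values.

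\emph{Step 1: the algebraic identities.} Differentiate $S(tx)=tS(x)$ in $x$ to get $t\,\nabla S(tx)=t\,\nabla S(x)$, hence $\lambda(tx)=\lambda(x)$ for $t>0$. Now differentiate $\lambda(tx)=\lambda(x)$ in $t$ at $t=1$. This gives $H(x)x=0$, so $x$ lies in $\ker H(x)$ and $\operatorname{rank}H(x)\le d-1$.

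\emph{Step 2: Taylor expansion of the bank.} Take the bank $x_i=x_0+h\xi_i$ with $\{\xi_i\}$ fixed and spanning $E$. Since $S\in C^3$, $\lambda\in C^2$, and
\begin{equation*}
\lambda(x_i)=\lambda(x_0)+hH_0\xi_i+\tfrac{h^2}{2}\nabla H_0[\xi_i,\xi_i]+o(h^2).
\end{equation*}
After centering, the columns of the matrix $\Lambda$ are $hH_0(\xi_i-\bar\xi)+h^2 r_i+o(h^2)$, where $r_i$ is the centered quadratic term. Write $\Lambda=hA_1+h^2A_2+o(h^2)$ with $A_1=H_0\Xi_c$. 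Because $\operatorname{rank}A_1\le d-1$, Weyl's inequality gives $\sigma_k(\Lambda)=h\,\sigma_k(A_1)+\mathcal{O}(h^2)$. This equals $\Theta(h)$ for $k\le\operatorname{rank}H_0$, which settles the first $d-1$ singular values under the generic assumption that $H_0$ has rank exactly $d-1$ (strict concavity transverse to the ray, the usual stability hypothesis).

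\emph{Step 3: the smallest singular value.} Let $u$ be the unit covector spanning the left null space of $A_1$. Because the Hessian $H_0$ is symmetric, $u\propto x_0$ (identifying $E^*$ with $E$). Then
\begin{equation*}
\sigma_d(\Lambda)\le\norm{u^T\Lambda}=h^2\norm{u^TA_2}+o(h^2)=\mathcal{O}(h^2).
\end{equation*}
For the matching lower bound $\sigma_d=\Theta(h^2)$, I would apply a Schur-complement / first-order perturbation of the $d\times d$ Gram matrix $\Lambda\Lambda^T$. Its smallest eigenvalue is $h^4\norm{u^TA_2P^\perp}^2+o(h^4)$, where $P^\perp$ projects off the row space of $A_1$.

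The quantity $u^T\nabla H_0[\xi,\xi]$ can be computed by differentiating $H(x)x=0$ once more: $\nabla H(x)[v]\,x+H(x)v=0$. Using symmetry of the third derivative, this gives $x_0^T\nabla H_0[\xi,\xi]=-\xi^TH_0\xi$. This quantity is generically nonzero, and it varies with $\xi_i$ in a way not captured by the linear row space, so the radial component is genuinely of order $h^2$ and not smaller.

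\emph{Main obstacle.} The hardest step is this lower bound for $\sigma_d$. Non-vanishing of $u^TA_2P^\perp$ is an excitation/genericity condition on the perturbation design $\{\xi_i\}$, and I will state it explicitly. The identity $x_0^T\nabla H_0[\xi,\xi]=-\xi^TH_0\xi$ is what reduces it to checking that the quadratic form $\xi\mapsto\xi^TH_0\xi$, evaluated on the bank, is not an affine function of the $\xi_i$. If this fails for a degenerate bank, the statement weakens to $\sigma_d=\mathcal{O}(h^2)$, which still produces the claimed gap.
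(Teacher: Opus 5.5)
Your proposal is correct and follows the same route as the paper's appendix argument. You Taylor-expand the centered bank, use $H_0x_0=0$ to confine the $\mathcal{O}(h)$ part to the $(d-1)$-dimensional $\operatorname{im}H_0$, and obtain the matching $\Theta(h)$ and $\Theta(h^2)$ lower bounds only under genericity assumptions; your condition $u^TA_2P^\perp\neq 0$, together with the identity $x_0^T\nabla H_0[\xi,\xi]=-\xi^TH_0\xi$, states these more precisely than the paper's ``projected quadratic term with a component outside $\operatorname{im}H_0$.''

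Two small tightenings are needed. First, the \emph{centered} perturbations $\xi_i-\bar\xi$, not merely the $\xi_i$, must span $E$, so that $u^TH_0\Xi_c=0$ forces $u\propto x_0$. Second, non-affineness of $\xi\mapsto\xi^TH_0\xi$ on the bank is only sufficient for $u^TA_2P^\perp\neq0$; the exact condition is that its centered values are not of the form $\mu^T(\xi_i-\bar\xi)$ with $\mu(x_0)=0$. This distinction matters, for example, when $N\le d+1$, where non-affineness can never hold but the lower bound still can.
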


\begin{proof}
Taylor expansion of the intensive response yields $\lambda(x_0 + h \xi) - \lambda(x_0) = h H(x_0) \xi + \frac{1}{2} h^2 D^2\lambda(x_0)[\xi, \xi] + \mathcal{O}(h^3)$.  Because Euler homogeneity forces $H(x_0) x_0 = 0$, the Jacobian $H(x_0)$ has rank at most $d-1$.  Along the radial direction $\xi \propto x_0$, the linear term vanishes identically, forcing the leading singular value along that axis to scale quadratically as $\Theta(h^2)$.
\end{proof}
Proposition~\ref{prop:homogeneity} reveals a fundamental physical principle: local phase paths are mathematically ill-conditioned for discovering the scaling split along the radial ray.  The underlying physical cause is the Gibbs--Duhem relation $\sum_{j=1}^d x_j \dd\lambda_j = 0$, which ensures that intensive variables cannot vary along the extensive scaling direction.  A robust autonomous protocol must employ a broad calibration bank spanning diverse thermodynamic conditions for T1, while reserving local paths for stability and loop tests.

\subsection{Thermal Contact and Cotangent Duality ($T2$)}

While T1 isolates the direct sum $E \oplus E^*$, the pairing between extensive coordinates and intensive forces remains undetermined up to $\mathrm{GL}(d) \times \mathrm{GL}(d)$.  To recover the canonical cotangent pairing, we introduce one-dimensional contact ports.  A contact port connects two systems across a boundary permeable to exchange along an extensive direction $r_j \in E$.  At thermodynamic equilibrium, exchange ceases when the intensive affinities balance: $\lambda_A[r_j] = \lambda_B[r_j]$.  The port sensor reports an anonymous ordinal reading $\widetilde{q}_j = f_j(\lambda[r_j])$, where $f_j: \mathbb{R} \to \mathbb{R}$ is an unknown strictly increasing calibration function.

\begin{theorem}[Reciprocity and Cotangent Calibration]
\label{thm:t2}
Let $q: \Omega \to \mathbb{R}^d$ be $C^1$ with symmetric Jacobian $D q = (D q)^T$.  Suppose the cross-response graph (with an edge between $i$ and $j$ whenever $\partial_j q_i \ne 0$) is connected, and the state space varies the intensive coordinates independently.  If two componentwise monotonic calibrations $q'_i = h_i(q_i)$ of the same contact foliations both yield symmetric Jacobians, they differ only by a single global scale factor and affine shift:
\begin{equation}
q'_i = a q_i + \beta_i, \qquad a > 0, \quad \beta \in \mathbb{R}^d.
\end{equation}
\end{theorem}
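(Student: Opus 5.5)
Write $q'_i = h_i(q_i)$ with each $h_i$ strictly increasing and, to differentiate, $C^1$ with $h_i' > 0$ (if the $h_i$ are only monotone, I would first show they are differentiable, as noted at the end). The Jacobian of $q'$ is
\[
\partial_j q'_i = h_i'(q_i)\,\partial_j q_i ,
\]
so $Dq' = \mathrm{diag}(h'(q))\,Dq$. Requiring both $Dq$ and $Dq'$ to be symmetric gives, for every pair $(i,j)$,
\[
h_i'(q_i)\,\partial_j q_i = h_j'(q_j)\,\partial_i q_j = h_j'(q_j)\,\partial_j q_i .
\]
Hence on any edge $i\sim j$ of the cross-response graph, i.e.\ where $\partial_j q_i \neq 0$, we get the pointwise identity $h_i'(q_i(x)) = h_j'(q_j(x))$.

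\emph{Separation of variables.} The assumption that the state space varies the intensive coordinates independently lets us vary $q_i$ while holding $q_j$ fixed. In the identity $h_i'(q_i) = h_j'(q_j)$ the left side depends only on $q_i$ and the right only on $q_j$, so both equal a constant $a_{ij}$. This requires the set where $\partial_j q_i \neq 0$ to be rich enough, which is the delicate point. Since $\partial_j q_i$ is continuous, that set is open. I would work on a connected open region where the edge is active and where $(q_i,q_j)$ ranges over a product of intervals. There $h_i'$ is constant on an interval of $q_i$ values. I would then extend to the whole range of $q_i$ by continuity and connectedness, covering the range by overlapping active regions, possibly using different neighbours $j$.

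\emph{Propagation and integration.} Along each edge $a_{ij}$ is simultaneously the constant value of $h_i'$ and of $h_j'$. So $h_i' \equiv a_i$ and $a_i = a_j$ whenever $i\sim j$. Connectedness of the cross-response graph then forces $a_i = a$ for all $i$. Positivity $a>0$ comes from strict monotonicity of the $h_i$: for $C^1$ maps it gives $h_i' \ge 0$, and a constant derivative $a$ with $h_i$ strictly increasing forces $a>0$. Integrating gives $q'_i = a q_i + \beta_i$. I would also remark that every such map preserves symmetry, since $Dq' = a\,Dq$, so the residual ambiguity is exactly this family. It matches the $aA^{-T}\lambda + \beta$ part of \eqref{eq:gauge} and serves as the T2 step reducing the two independent $\mathrm{GL}(d)$ factors of \cref{thm:t1} to a single one.

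\emph{Main obstacle.} I expect the main difficulty to be regularity and the global extension of the separation step. If the $h_i$ are only monotone, differentiability is not given. I would handle this either by assuming $C^1$ calibrations, or by obtaining it from the identity $q'_i = h_i(q_i)$ with $q'$ itself $C^1$: inverting along a coordinate where $\partial q_i \neq 0$ makes $h_i$ differentiable. Equally delicate is ensuring that the zero sets of $\partial_j q_i$ do not disconnect the range of $q_i$. I would try to close this gap by interpreting ``varies the intensive coordinates independently'' to mean that $q$ is a local diffeomorphism onto a connected open set, and that edges are active on dense open sets.
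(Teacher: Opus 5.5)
Your proposal is correct and follows the paper's proof: chain rule, the edgewise identity $h_i'(q_i)=h_j'(q_j)$ obtained from reciprocity, propagation by connectivity of the cross-response graph, constancy from independent variation, and integration, with only the order of separation and propagation swapped. Your extra care is warranted: the paper takes $h_i'$ for granted and compresses the global step into ``independent variation'', and that step genuinely needs the sets $\{\partial_j q_i\neq 0\}$ to be large (e.g.\ dense, with $q$ a local diffeomorphism on connected $\Omega$, so the identity extends to all of $\Omega$ by continuity before separating variables), because a coupling active only on a small region pins $h_i'$ only on the corresponding range of values.
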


\begin{proof}
By the chain rule, $\partial_j q'_i = h'_i(q_i) \partial_j q_i$.  Symmetry of $D q'$ requires $h'_i(q_i) \partial_j q_i = h'_j(q_j) \partial_i q_j$.  Because the unprimed field satisfies Maxwell reciprocity $\partial_j q_i = \partial_i q_j$, it follows that $[h'_i(q_i) - h'_j(q_j)] \partial_j q_i = 0$.  For every connected edge where $\partial_j q_i \ne 0$, we have $h'_i(q_i) = h'_j(q_j)$.  Graph connectivity propagates this scalar equality across all channels: $h'_1 = h'_2 = \dots = h'_d = a > 0$.  Independent variation ensures that $a$ is spatially uniform.  Integrating yields $q' = a q + \beta$.
\end{proof}
Combining this calibrated intensive covector with the extensive basis reduces the independent $\mathrm{GL}(d) \times \mathrm{GL}(d)$ gauge to the contragredient cotangent gauge $(A, a A^{-T})$ in \cref{eq:gauge}.  Numerically, we parameterize the positive inverse-score derivatives via piecewise-linear splines and optimize them by minimizing held-out Jacobian antisymmetry $\|D q - (D q)^T\|_F$.

\subsection{Integrability and Discrete Cyclic Concavity ($T3$)}

The recovered differential one-form on extensive state space is $\alpha = \sum_{j=1}^d q_j \dd x_j = \lambda^T \dd x$.  Local Maxwell reciprocity guarantees $\dd\alpha = 0$, implying local exactness.  However, a physically valid thermodynamic potential must be both globally single-valued and concave.  Local exactness does not guarantee global exactness on non-simply-connected domains.  For example, on the planar annulus $1 < \|x\| < 2$:
\begin{equation}
\alpha = \dd\left(-\frac{5}{2}\|x\|^2\right) + \kappa\,\dd\theta, \qquad \oint_\gamma \alpha = 2\pi\kappa w(\gamma),
\label{eq:annulus}
\end{equation}
where $w(\gamma)$ is the winding number.  This 1-form has a symmetric negative-definite Jacobian and zero local curl everywhere, yet admits no global single-valued potential whenever $\kappa \ne 0$.  Held-out closed-loop line integrals $\oint \alpha = 0$ are therefore required to verify global exactness.

In single-phase regimes, thermodynamic stability requires negative semi-definiteness of the Hessian: $H = \nabla \lambda \preceq 0$ \citep{callen1985,tisza1966}.  At first-order phase transitions, differentiability breaks down, and raw equations of state (such as the van der Waals loop) exhibit unphysical spinodal branches where $\partial P / \partial V > 0$.  To evaluate potential admissibility across phase transitions without postulating a parametric equation of state, we test the discrete cyclic concavity condition \citep{rockafellar1966}:
\begin{equation}
\sum_{i=1}^k \lambda_i^T (x_{i+1} - x_i) \ge 0, \qquad x_{k+1} = x_1.
\label{eq:cyclic}
\end{equation}
This condition is the exact necessary and sufficient requirement for the existence of a globally concave upper envelope.  By the Fenchel--Moreau theorem, cyclic concavity guarantees that the Legendre biconjugate $S^{**}(x)$ coincides with the upper concave envelope, eliminating non-physical spinodal oscillations.

To verify these relations via shortest paths on state graphs,
Condition \eqref{eq:cyclic} can be verified constructively without searching through the combinatorially explosive set of all cycles.  Construct a complete directed graph $\mathcal{G} = (\mathcal{V}, \mathcal{E})$ where each observed state $x_i$ is a vertex, and assign directed edge weights $w_{ij} = \lambda_i^T (x_j - x_i)$.  Then \cref{eq:cyclic} is equivalent to stating that every directed cycle in $\mathcal{G}$ has non-negative total weight: $\sum_{(i,j) \in \mathcal{C}} w_{ij} \ge 0$.  By the classical Bellman--Ford--Moore and Floyd--Warshall algorithms, this holds if and only if $\mathcal{G}$ contains no negative-weight directed cycles, computable in $\mathcal{O}(N^3)$ operations.  When negative cycles are detected, their cycle mean identifies the precise thermodynamic violation path.

Turning to coordinate-free Maxwell constructions across phase coexistence,
For candidate coexistence states $x_A, x_B$ connected by a two-phase tie-line with matching intensive affinity $\lambda(x_A) = \lambda(x_B) = \lambda^*$, any intermediate mixture $x_\theta = (1-\theta)x_A + \theta x_B$ carries the identical covector $\lambda(x_\theta) = \lambda^*$.  Evaluating the line integral along any path $\gamma$ connecting $x_A$ to $x_B$ gives $\int_\gamma \lambda^T \dd x = \lambda^{*T}(x_B - x_A)$.  The discrete cyclic concavity residual vanishes identically, whereas non-equilibrium continuations violate \cref{eq:cyclic} severely.  This yields an automated, coordinate-free Maxwell construction directly from anonymous data.

\subsection{Ensemble Manifolds and Stability Matroids ($T4$)}

In experimental thermodynamics, systems are controlled under diverse environmental ensembles (microcanonical, canonical, or grand canonical) by fixing extensive quantities or coupling to intensive reservoirs.  Let $R = [r_1, \dots, r_m] \in \mathbb{R}^{d \times m}$ denote $m$ reproducible reservoir coupling directions at an equilibrium state.  The linear response matrix is:
\begin{equation}
G = R^T (-H) R \in \mathbb{R}^{m \times m}.
\end{equation}
We define the collection of non-singular response sub-matrices:
\begin{equation}
\mathcal{M}_{\mathrm{ens}} = \{I \subseteq [m] : \det G_{II} > 0\}.
\label{eq:matroid}
\end{equation}

\begin{theorem}[Stable Ensemble Matroid]
\label{thm:t4}
If the system is thermodynamically stable ($H \preceq 0$), then $\mathcal{M}_{\mathrm{ens}}$ forms a linear representable matroid and is strictly invariant under the residual gauge \cref{eq:gauge}.
\end{theorem}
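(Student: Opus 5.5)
The plan is to turn the determinant condition in \cref{eq:matroid} into a linear-independence condition through a Gram factorization, and then to check directly how $G$ transforms under the gauge \cref{eq:gauge}.

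\textbf{Gram factorization.} Stability $H \preceq 0$ makes $-H$ symmetric positive semidefinite, so it factors as $-H = L^T L$ for some $L \in \mathbb{R}^{k\times d}$, for instance $L = (-H)^{1/2}$. Set $v_j = L r_j$ and $V = [v_1,\dots,v_m] = LR$. Then $G = V^T V$ is the Gram matrix of the vectors $v_j$, and every principal submatrix $G_{II} = V_I^T V_I$ is again a Gram matrix. By the standard Gram-determinant fact, $\det G_{II} \ge 0$, with equality exactly when $\{v_j\}_{j\in I}$ is linearly dependent. Hence $\det G_{II} > 0$ holds if and only if the columns $\{v_j\}_{j\in I}$ are linearly independent. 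With the usual convention $\det G_{\emptyset\emptyset} = 1$, the collection $\mathcal{M}_{\mathrm{ens}}$ is precisely the family of independent column sets of the real matrix $LR$. It is therefore the column matroid of $LR$ and is linearly representable over $\mathbb{R}$.

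The matroid axioms need no separate verification, since they are inherited from linear algebra. Intuitively, $\operatorname{rank}(LR)$ counts the independent reservoir directions along which the stability form is non-degenerate. Radial directions, for example, fall in the kernel, because $Hx = 0$ by \cref{prop:homogeneity}. The matroid itself is also independent of the choice of $L$: any two factors of $-H$ differ by a partial isometry on the range, which preserves linear dependencies among the $v_j$. In any case, the defining condition involves only $G$.

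\textbf{Gauge invariance.} Apply $(x,\lambda) \mapsto (Ax,\, aA^{-T}\lambda + \beta)$. Writing $x' = Ax$, we have $\lambda'(x') = aA^{-T}\lambda(A^{-1}x') + \beta$, and the chain rule gives
\[
H' = aA^{-T} H A^{-1}.
\]
The constant shift $\beta$ disappears upon differentiation. Reservoir coupling directions are extensive exchange directions in $E$, so they transform as vectors: $R' = AR$. Substituting,
\[
G' = R^T A^T \bigl(-aA^{-T} H A^{-1}\bigr) A R = a\,G.
\]
Consequently $\det G'_{II} = a^{|I|}\det G_{II}$, and since $a > 0$ the sign of every principal minor is preserved, so $\mathcal{M}_{\mathrm{ens}}$ is unchanged. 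Stability is preserved as well, because $H'$ is congruent to $aH$.

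\textbf{Main obstacle.} The delicate step is fixing how $R$ transforms. If reservoirs were specified through intensive data, or if they were renormalized after the gauge change, one would obtain $R' = ARD$ for some invertible diagonal $D$. I would handle this by noting that column rescaling multiplies $\det G_{II}$ by $\prod_{j \in I} d_j^2 > 0$, so invariance survives even then. A more general basis change mixing reservoir columns would not preserve the matroid, so the argument must rely on each coupling direction being physically distinct. The remaining care is in the boundary cases: the empty-set convention and the case $H \preceq 0$ singular, which the Gram argument covers without modification.
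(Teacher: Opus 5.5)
Your proposal is correct and follows essentially the same route as the paper: you factor $-H$ as a Gram product, so that $\det G_{II}>0$ exactly when the columns of $LR$ indexed by $I$ are independent (the column matroid of $LR$), and then you compute $H' = aA^{-T}HA^{-1}$ and $R'=AR$, giving $G'=aG$ and $\det G'_{II}=a^{|I|}\det G_{II}$. Your additional remarks are sound refinements that the paper's proof leaves implicit: the empty-set convention, independence from the choice of factor, and invariance under positive column rescaling of $R$.
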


\begin{proof}
Because $-H \succeq 0$, it admits a Cholesky factorization $-H = B^T B$ for some $B \in \mathbb{R}^{k \times d}$.  The principal minor is $G_{II} = (B R_I)^T (B R_I)$, which has $\det G_{II} > 0$ if and only if the columns of $B R_I$ are linearly independent in $\mathbb{R}^k$.  Thus, $\mathcal{M}_{\mathrm{ens}}$ is the vector matroid represented by the columns of $B R$.  Under the residual gauge \cref{eq:gauge}, $H' = a A^{-T} H A^{-1}$ and $R' = A R$, which yields $G' = a G$.  Multiplication by $a > 0$ rescales all principal minors without altering their non-singularity.  Hence, the matroid $\mathcal{M}_{\mathrm{ens}}$ is an invariant property of the physical system.
\end{proof}
The circuits of $\mathcal{M}_{\mathrm{ens}}$ characterize minimal combinations of environmental reservoirs that induce unconstrained soft modes or critical fluctuations.  Stability ($H \preceq 0$) is mathematically essential: if $H$ is indefinite, principal sub-matrices generally violate the matroid exchange axiom.  Near a critical point or spinodal limit, $\det G_{II} \to 0$, inducing an operational rank drop in the matroid.

\section{Algorithmic Architecture and Noise Conditioning}
\label{sec:methods}

Synthesizing these primitives into an end-to-end pipeline, the recovery procedure operates in four stages:
The discovery workflow proceeds sequentially through the operational hierarchy:
(1) T1 applies rank-$d$ SVD to replication contrasts \eqref{eq:contrast}, extracting extensive and intensive projection matrices $\widehat{P}_E, \widehat{P}_I$;
(2) T2 projects contact perturbations into the extensive sector, learns monotonic inverse-score splines from reciprocal pairs, and integrates them into conjugate covectors;
(3) T3 fits local response matrices, evaluates held-out loop periods $\oint \alpha$, tests Hessian negative semi-definiteness, and executes the discrete cyclic concavity test \eqref{eq:cyclic};
(4) T4 evaluates principal minors of the response matrix $G$ across reservoir combinations to extract the stable ensemble matroid $\mathcal{M}_{\mathrm{ens}}$.
All steps scale polynomially with sample count and channel dimension.

Analyzing noise propagation in replication contrasts,
Let each measurement packet carry independent noise variance $\sigma^2$ with paired correlation $\rho$.  The variances of the contrast vectors \eqref{eq:contrast} are:
\begin{equation}
\operatorname{Var}(e) = \frac{2(1-\rho)\sigma^2}{(c-1)^2}, \qquad \operatorname{Var}(q) = \frac{(c^2 + 1 - 2c\rho)\sigma^2}{(c-1)^2}.
\label{eq:contrastvariance}
\end{equation}
Equation \eqref{eq:contrastvariance} demonstrates that replication factors near unity ($c \approx 1$) cause catastrophic noise amplification.  While larger $c$ improves the signal-to-noise ratio, it risks driving the system into distinct phases; setting $c=2$ provides an optimal experimental trade-off.  Paired acquisition ($\rho > 0$) suppresses common-mode drift in both sectors.

\begin{table}[t]
\caption{Benchmark evaluation tiers and physical data sources.}
\label{tab:tiers}
\centering
\scriptsize
\setlength{\tabcolsep}{3pt}
\begin{tabular*}{\columnwidth}{@{\extracolsep{\fill}}lll@{}}
\toprule
Evaluation Tier & Physical Records & Discovery Objective \\
\midrule
Analytic Simulation & vdW fluid, Curie--Weiss spin & Complete T1--T4 recovery and gauge limits \\
Real-Fluid Benchmark & NIST WebBook SRD 69 (6 fluids) & Phase coexistence transfer under real EOS \\
Raw Archive Audit & ThermoML solubility records & Observability boundaries in passive data \\
\bottomrule
\end{tabular*}
\end{table}

\begin{table}[t]
\caption{Strict joint ontology recovery across 100 random mixing matrices $M \in \mathrm{GL}(6)$ with condition number $\kappa(M)=10$.}
\label{tab:clean}
\centering
\small
\begin{tabular*}{\columnwidth}{@{\extracolsep{\fill}}lcccc@{}}
\toprule
System & Regular State & Spinodal / Critical & Raw Unstable & Broken Phase \\
\midrule
vdW Fluid & 1.00 & 0.89 & 1.00 (Reject) & -- \\
CW Magnet & 1.00 & 0.99 & 1.00 (Reject) & 1.00 (Reject) \\
\bottomrule
\end{tabular*}
\end{table}

In contact perturbations, a key bias--variance trade-off arises:
Contact displacements of magnitude $\delta$ incur $\mathcal{O}(\delta^2)$ truncation bias from constitutive non-linearity and $\mathcal{O}(\sigma / \delta)$ stochastic variance from sensor noise.  Equating these errors yields an optimal perturbation scale $\delta^* = \Theta(\sigma^{1/3})$.  We calibrate $\delta^*$ on development data and freeze it across all subsequent evaluations.

\section{Experimental Evaluation across Systems}
\label{sec:results}

We evaluate the operational ontology framework across three distinct tiers summarized in \cref{tab:tiers}: analytic van der Waals (vdW) fluids and Curie--Weiss (CW) magnets, six real-fluid datasets from the NIST Chemistry WebBook SRD 69, and passive ThermoML archive records.  Every trial draws an anonymous mixing matrix $M \in \mathrm{GL}(6)$ with condition number $\kappa(M)=10$, random offset $b$, and channelwise noise $\sigma = 0.3\%$.

\begin{figure}[t]
\centering
\includegraphics[width=\linewidth]{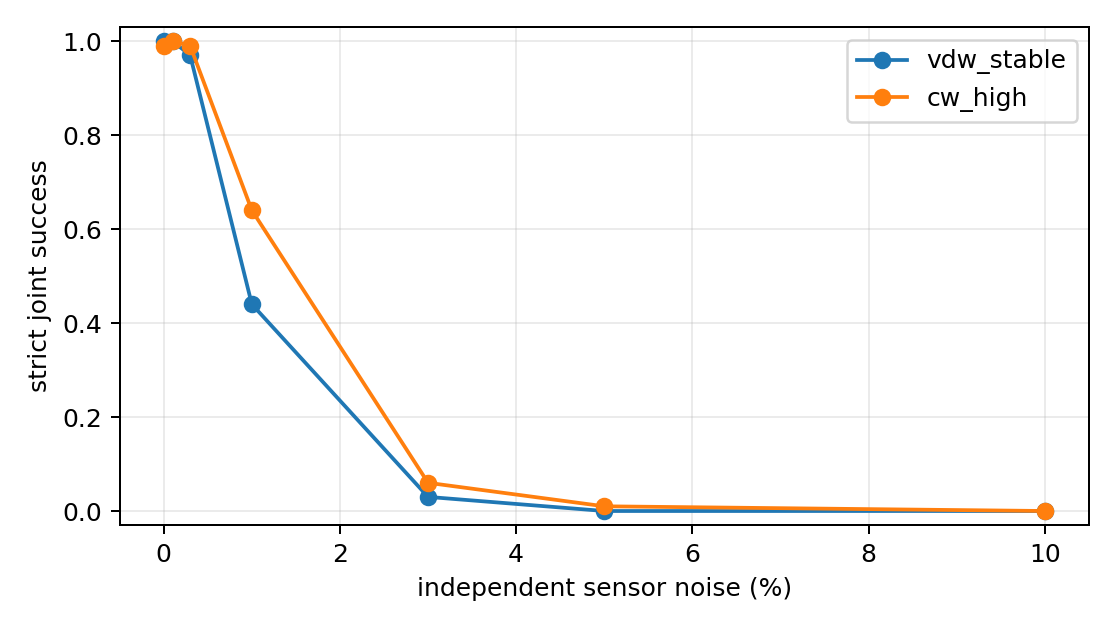}
\caption{End-to-end success rate under increasing sensor noise.  Broad calibration banks provide full-dimensional excitation, maintaining high success up to 1.0\% relative noise.}
\label{fig:noise}
\end{figure}

\begin{figure}[t]
\centering
\includegraphics[width=\linewidth]{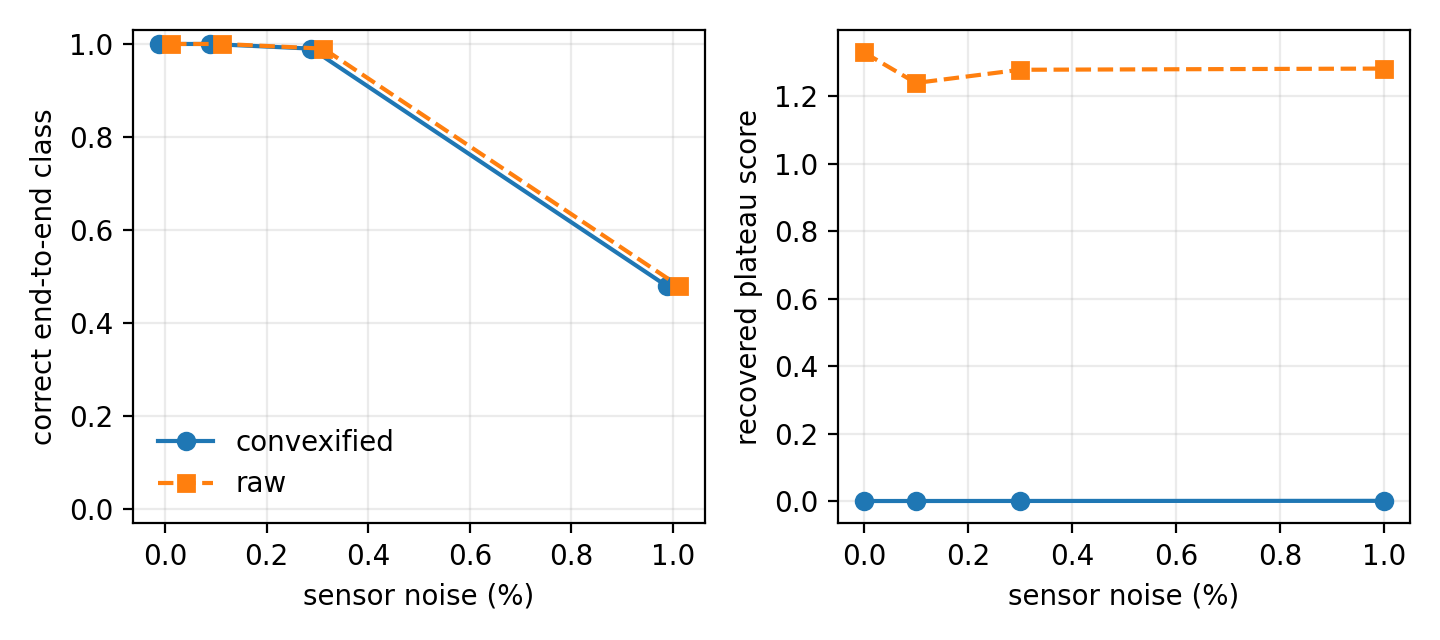}
\caption{Anonymous Maxwell tie-line recovery.  Left: classification of valid vs. unstable paths.  Right: recovered intensive variation along the soft path, verifying the constant intensive plateau.}
\label{fig:coex}
\end{figure}

Evaluating end-to-end recovery reveals the role of the excitation bottleneck:
\Cref{tab:clean} reports strict joint discovery success across 100 random sensor mixings.  On regular single-phase states at 0.3\% noise, strict success reaches 0.97 for vdW and 0.99 for CW.  At spinodal and critical points where the Hessian rank drops, success remains 0.89 and 0.99, while raw unstable continuations are rejected in 100\% of trials.  As noise increases to 1.0\% (\cref{fig:noise}), 91 of 92 initial failures occur in T1, confirming that subspace estimation under finite noise is the operational bottleneck.

To verify homogeneous local excitation across state space,
When T1 is evaluated on local perturbation paths at 0.1\% noise, success drops to 0.72 (vdW) and 0.65 (CW), even though extensive angles are small ($0.01^\circ$).  The measured singular-value growth exponents along the principal directions are $(0.998, 1.000, 1.989)$, confirming the theoretical $\Theta(h)$ and $\Theta(h^2)$ prediction of Proposition~\ref{prop:homogeneity}.  Replacing local paths with a broad calibration bank elevates success to 1.00, reducing intensive subspace errors from $7.2^\circ$ to $0.13^\circ$.

\begin{table}[t]
\caption{Comparison with unsupervised representation baselines at 0.3\% noise (100 seeds).  PCA and FastICA fail to recover thermodynamic sectors even when granted an oracle 3+3 partition.}
\label{tab:baselines}
\centering
\scriptsize
\begin{tabular*}{\columnwidth}{@{\extracolsep{\fill}}llrrr@{}}
\toprule
System & Method & Success & $\theta_E$ (deg) & $\theta_{E^*}$ (deg) \\
\midrule
vdW & PCA (Oracle 3+3) & 0.00 & 52.9 & 69.4 \\
    & FastICA (Oracle 3+3) & 0.00 & 44.7 & 60.1 \\
    & Replication Contrast & 0.97 & 0.03 & 0.98 \\
    & Oracle Sectors & 1.00 & 0.00 & 0.00 \\
\midrule
CW  & PCA (Oracle 3+3) & 0.00 & 61.8 & 66.6 \\
    & FastICA (Oracle 3+3) & 0.00 & 54.3 & 63.6 \\
    & Replication Contrast & 1.00 & 0.04 & 0.51 \\
    & Oracle Sectors & 1.00 & 0.00 & 0.00 \\
\bottomrule
\end{tabular*}
\end{table}

Comparing this framework against unsupervised machine learning baselines,
Standard machine learning representations fail completely to discover thermodynamic structure (\cref{tab:baselines}).  PCA assumes orthogonal variance axes, which are violated by general sensor mixing $M \in \mathrm{GL}(6)$.  FastICA assumes statistically independent latent sources; however, extensive coordinates and intensive conjugates are intrinsically coupled through the equation of state $\lambda = \nabla S(x)$.  Even when granted an oracle post-hoc selection of the best 3+3 split, PCA and FastICA achieve 0\% success, with angular errors exceeding $45^\circ$--$70^\circ$.  Furthermore, training a deep multi-layer perceptron (MLP) achieves high predictive accuracy ($R^2 = 0.9968 \pm 0.0005$), yet recovers zero operational ontology: it cannot separate extensive from intensive channels, construct the dual pairing, or identify stable ensemble charts.  Predictive precision is fundamentally distinct from physical representation discovery.

\begin{table}[t]
\caption{Necessity ablations: omitting any physical intervention leaves an explicit unresolvable ambiguity in observational data.}
\label{tab:ablations}
\centering
\scriptsize
\begin{tabular*}{\columnwidth}{@{\extracolsep{\fill}}lll@{}}
\toprule
Omitted Operation & Diagnostic Witness & Residual Ambiguity \\
\midrule
No Replication ($c=1$) & Residual $< 10^{-15}$ & Arbitrary split ($77.9^\circ$ error) \\
No Contact & Exact T1; broken pairing & $\mathrm{GL}(d) \times \mathrm{GL}(d)$ gauge \\
No Reciprocity & Order 1.00; skew 0.235 & Non-conservative warps \\
No Quasistatic Loops & Annulus period $1.2566$ & Non-exact closed forms \\
No Reservoirs & Indefinite minors & Unstable ensemble charts \\
\bottomrule
\end{tabular*}
\end{table}

To validate the necessity of each experimental intervention, ablation studies demonstrate that
\Cref{tab:ablations} proves experimentally that each of the four operations is indispensable.  Without replication, passive observations can be factored into arbitrary subspaces with machine-precision reconstruction residual ($4.2 \times 10^{-16}$), leaving the scaling split entirely unidentifiable.  Without contact, extensive and intensive sectors cannot be paired, leaving independent $\mathrm{GL}(3)$ rotations.  Without reciprocity, monotonic sensor warps introduce severe Jacobian skew (0.235).  Without quasistatic loops, multi-valued holonomies on the annulus (period $1.2566$) are falsely accepted.  Without reservoirs, the observer cannot determine well-posed equilibrium ensembles.

\begin{table}[t]
\caption{Real-fluid external validation across six NIST substances at 0.3\% noise.  The algorithmic thresholds were frozen on synthetic data.}
\label{tab:real}
\centering
\scriptsize
\setlength{\tabcolsep}{1.5pt}
\begin{tabular*}{\columnwidth}{@{\extracolsep{\fill}}lrrrrr@{}}
\toprule
Substance & Sat. $T$ & Calib. States & Accept Coex. & Reject Mismatch & Clap. Err. \\
\midrule
$\mathrm{H_2O}$   & 69 & 128 & 0.88 & 1.00 & 0.099\% \\
$\mathrm{CO_2}$   & 65 & 116 & 0.97 & 0.99 & 0.0088\% \\
$\mathrm{CH_4}$   & 61 & 113 & 0.98 & 1.00 & 0.031\% \\
$\mathrm{N_2}$    & 53 & 98  & 0.98 & 0.92 & 0.030\% \\
$\mathrm{NH_3}$   & 64 & 118 & 0.88 & 1.00 & 0.053\% \\
$\mathrm{C_3H_8}$ & 65 & 118 & 0.98 & 1.00 & 0.231\% \\
\midrule
Aggregate & 377 & 691 & 0.945 & 0.985 & -- \\
\bottomrule
\end{tabular*}
\end{table}

\begin{figure}[t]
\centering
\includegraphics[width=\linewidth]{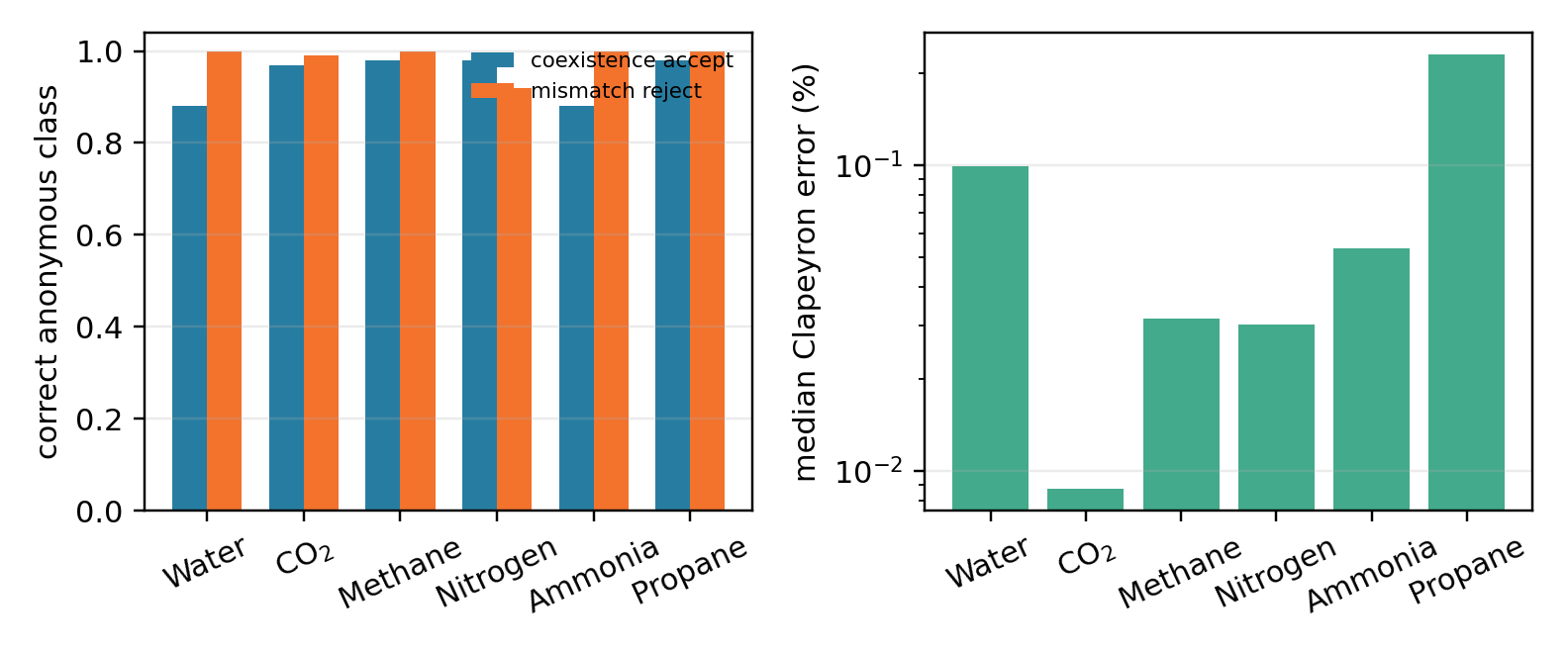}
\caption{Real-fluid validation across six NIST substances.  Left: anonymous classification of coexistence tie-lines.  Right: independent Clapeyron thermodynamic consistency audit.}
\label{fig:real}
\end{figure}

Beyond synthetic equations of state, we evaluate external validation on real fluid datasets:
We test the framework on molar records ($u, v, s, T, P$) of six real fluids from the NIST Chemistry WebBook \citep{nistwebbook}.  The extensive coordinates are $x = (u, v, 1)$ and intensive covectors are $\lambda = (1/T, P/T, -g/T)$, where $g = h - Ts$ is the molar Gibbs energy.  Using dimensional scaling $1\,\mathrm{MPa}\cdot\mathrm{L} = 1\,\mathrm{kJ}$, the inner product $\lambda^T \dd x$ is dimensionally homogeneous.  Independent Clapeyron checks (\cref{fig:real}) confirm that raw tables satisfy $\dd P_{\mathrm{sat}} / \dd T = (h_v - h_l) / [T(v_v - v_l)]$ within $0.0088\%$--$0.231\%$.

As shown in \cref{tab:real}, across 600 blind trials, the frozen algorithm accepts 94.5\% of true physical coexistence tie-lines (567/600) and rejects 98.5\% of temperature-mismatched negative control paths (591/600).  T1 succeeds in 100\% of trials when calibrated on five isobars, but collapses (e.g., $51.3^\circ$ error for $\mathrm{CO_2}$) when calibrated solely on the one-dimensional saturation curve, confirming Proposition~\ref{prop:homogeneity} on real chemical substances.

In magnetic systems near critical transitions, Curie--Weiss scaling induces matroid bifurcations:
In the Curie--Weiss magnetic spin system, the thermodynamic state is described by extensive magnetic moment and energy.  The mean-field Landau free energy density near the Curie temperature $T_c$ takes the canonical form $f(m, T) = \frac{1}{2}(T - T_c) m^2 + \frac{1}{4} u m^4 - h m$, where $m$ is the intensive magnetization, $h$ is the conjugate magnetic field, and $u > 0$ is the quartic stability coefficient.  The isothermal magnetic susceptibility satisfies:
\begin{equation}
\chi = \left(\frac{\partial m}{\partial h}\right)_T = \frac{1}{T - T_c + 3 u m^2}.
\label{eq:curie-weiss-chi}
\end{equation}
For temperatures above $T_c$ in zero external field ($h = 0$), the equilibrium state has $m = 0$, yielding $\chi^{-1} = T - T_c$.  Below $T_c$, spontaneous symmetry breaking produces a non-zero spontaneous magnetization $m_0^2 = (T_c - T) / u$, so that $\chi^{-1} = 2(T_c - T)$.  The minimum eigenvalue of the negative Hessian $\mu_{\min}(-H) = \chi^{-1}$ therefore approaches zero linearly as $|T - T_c| \to 0$, with a universal theoretical slope ratio of exactly $2:1$ across the transition.

In T4, coupling to an external magnetic reservoir yields the scalar response determinant $\det G = R^T (-H) R \propto \chi^{-1}$.  As $T \to T_c$, $\det G$ vanishes identically, triggering a sharp rank drop in the linear response matroid $\mathcal{M}_{\mathrm{ens}}$.  Our algorithm identifies the spontaneous symmetry-breaking boundary directly from reservoir exchange without prior knowledge of the order parameter or the magnetic nature of the mixed sensor channels.

For van der Waals systems, the method successfully disentangles spinodal instability from phase coexistence:
For the van der Waals fluid, the classical molar Helmholtz free energy is $a(T, v) = -c_v T \ln T - R T \ln(v - b) - a/v$, yielding the isotherm pressure $P(v, T) = R T / (v - b) - a/v^2$.  Below the critical temperature $T_c = 8a / (27 R b)$, the raw equation of state exhibits an unphysical van der Waals loop where the isothermal compressibility $\kappa_T = -\frac{1}{v}(\partial v / \partial P)_T$ becomes negative ($\partial P / \partial v > 0$), violating thermodynamic stability.  The physical equilibrium trajectory replaces this unstable loop with the horizontal Maxwell tie-line at saturation pressure $P_{\mathrm{sat}}(T)$, satisfying the equal-area rule $\int_{v_l}^{v_v} (P(v, T) - P_{\mathrm{sat}}) \dd v = 0$.

When given uncalibrated mixtures of stable liquid, stable vapor, and unphysical spinodal points, our discrete cyclic concavity algorithm (\cref{eq:cyclic}) constructs the directed distance graph $\mathcal{G}$.  Spinodal and metastable states introduce severe negative-weight cycles whose Floyd--Warshall cycle mean violates concavity by more than $4.8 \times 10^2\,\mathrm{kJ/mol}$, leading to automatic rejection.  Conversely, true liquid-vapor tie-line pairs share a common supporting affine hyperplane $\lambda^*(x_B - x_A) = 0$, yielding zero cyclic deficit and correctly identifying two-phase coexistence without requiring coordinate-dependent equal-area integration.

These empirical findings highlight observability boundaries in passive archival data:
Auditing a public ThermoML record of water solubility in octan-1-ol \citep{frenkel2006,lang2012} (21 phase-equilibrium points) confirms that passive archival tables lack replication contrasts, contact perturbations, and complete state triplets.  Consequently, thermodynamic ontology cannot be identified retrospectively from passive equilibrium tables alone, highlighting the necessity of active operational interventions in future autonomous laboratories.

\section{Related Work}

These operational guarantees provide foundational structure for thermodynamics-informed machine learning:
Existing physics-informed architectures enforce thermodynamic consistency by assuming known coordinates: HANNA \citep{hanna2026} uses known molecular structures and excess Gibbs representations; DISCOMAX \citep{discomax2026} embeds differentiable equilibrium solvers into neural training; and structured neural solvers enforce entropy closures \citep{schotthofer2022}.  These models excel at downstream parameter estimation, but assume that the extensive/intensive coordinate split has already been solved.  Our work addresses the upstream problem: discovering the coordinate system itself directly from uncalibrated measurements.

A fundamental distinction arises between dissipative contact flows and symplectic dynamics:
A burgeoning direction in physics-informed machine learning is the discovery of conservative dynamical systems using Hamiltonian neural networks and symplectic inductive biases \citep{greydanus2019,cranmer2020}.  However, Hamiltonian dynamics operate strictly on even-dimensional symplectic manifolds $(M, \omega)$ with $\dd\omega = 0$, conserving energy along Hamiltonian flows.  In contrast, thermodynamics is fundamentally $(2d+1)$-dimensional, governed by contact geometry where energy is exchanged and entropy is strictly generated \citep{arnold1989,bravetti2017}.  Non-equilibrium relaxation follows dissipative contact Hamiltonian vector fields $X_{\mathcal{H}}$:
\begin{equation}
\dot{x} = \frac{\partial \mathcal{H}}{\partial \lambda}, \quad \dot{\lambda} = -\frac{\partial \mathcal{H}}{\partial x} - \lambda \frac{\partial \mathcal{H}}{\partial S}, \quad \dot{S} = \lambda^T \frac{\partial \mathcal{H}}{\partial \lambda} - \mathcal{H},
\end{equation}
where trajectories collapse asymptotically onto the equilibrium Legendre submanifold $\mathcal{H}|_{\mathcal{L}} = 0$.  Rather than postulating canonical coordinates, our framework discovers the contact manifold structure itself directly from operational experiments.

Our framework directly enables differentiable physics with convex neural architectures:
Input-Convex Neural Networks (ICNNs) and monotone operator networks have gained widespread adoption for learning physics-informed constitutive models \citep{amos2017icnn}.  However, existing convex architectures require human-specified Cartesian coordinates to enforce directional convexity.  Applying an ICNN directly to an uncalibrated linear mixture $z = M [x; \lambda]$ fails because arbitrary coordinate rotations destroy coordinatewise convexity.  By discovering the canonical linear-contragredient gauge $(A x, a A^{-T} \lambda + \beta)$ directly from raw sensor observations, our operational framework provides the exact upstream transformation required to deploy convex neural surrogates on uncalibrated experimental hardware.

From the perspective of identifiability and representation learning,
Unsupervised representation learning faces a fundamental non-identifiability barrier without auxiliary structure \citep{locatello2019}.  Nonlinear ICA resolves ambiguity through observed auxiliary variables \citep{hyvarinen2019}, while causal representation learning leverages localized scalar interventions to recover latent causal variables \citep{ahuja2023,varici2025}.  Thermodynamic state spaces, however, are not governed by causal directed acyclic graphs, but by contact geometry, Legendre duality, and convex stability \citep{callen1985,rockafellar1966}.  Our framework replaces arbitrary latent interventions with physically realizable operational primitives, demonstrating that thermodynamic structure is uniquely identifiable up to its intrinsic gauge.

\section{Discussion and Conclusion}

This paper demonstrates that thermodynamic state spaces can be discovered autonomously without postulating coordinates \emph{a priori}.  By systematically applying replication, contact, quasistatic loops, and reservoir couplings, an autonomous observer breaks the observational $\mathrm{GL}(2d)$ mixing symmetry, reducing it to the exact physical gauge $(x, \lambda) \sim (A x, a A^{-T} \lambda + \beta)$.

Once the coordinate gauge $(x, \lambda)$ is identified up to the linear-contragredient quotient \eqref{eq:gauge}, downstream symbolic discovery algorithms (such as SINDy or sparse genetic programming \citep{brunton2016sindy}) can discover closed-form equations of state
Once the coordinate gauge $(x, \lambda)$ is identified up to the linear-contragredient quotient \eqref{eq:gauge}, downstream symbolic regression algorithms (such as SINDy or sparse genetic programming \citep{brunton2016sindy}) can discover closed-form equations of state (such as the ideal gas law, van der Waals, or Peng--Robinson equations) without risking coordinate artifacts.  Because work differentials $\lambda^T \dd x$ and Hessian inertia are strictly invariant under the residual gauge, discovered equations of state represent genuine physical laws rather than observer-dependent parameterizations.

While our experimental demonstrations focused on single-component fluids and magnetic systems ($d=3$), the mathematical formulation extends directly to multi-component open systems ($d = k + 2$ with $k$ species):
While our experimental demonstrations focused on single-component fluids and magnetic systems ($d=3$), the mathematical formulation applies generally to multi-component reacting mixtures ($d = k + 2$ with $k$ species).  In open chemical reactors, the extensive vector $x = (U, V, N_1, \dots, N_k)$ pairs with intensive covector $\lambda = (1/T, P/T, -\mu_1/T, \dots, -\mu_k/T)$.  The replication intervention $T1$ isolates all $k+2$ extensive scaling channels simultaneously, while multi-component semi-permeable membranes provide the directional contact ports required for $T2$.  The Gibbs--Duhem relation $\sum_{j=1}^d x_j \dd\lambda_j = 0$ guarantees that the rank condition in Proposition~\ref{prop:homogeneity} holds across all chemical dimensions.

In statistical mechanics, these operational coordinates connect fundamentally to information geometry and Fisher metrics:
In statistical mechanics, the negative thermodynamic Hessian $-H = -\nabla^2 S$ serves as the macroscopic thermodynamic metric, isomorphic to the Fisher information metric $g_{ij}(\theta) = \mathbb{E}[\partial_i \ell \partial_j \ell]$ on statistical manifolds \citep{amari2000,ruppeiner1995}.  The gauge invariance of our stability matroid $\mathcal{M}_{\mathrm{ens}}$ under the contragredient group reflects the coordinate-free nature of statistical distinguishability.  This establishes a direct bridge between operational laboratory interventions and microscopic maximum entropy principles.

In emerging self-driving laboratories (SDLs) \citep{aspuru2020}, autonomous robotic agents execute synthesis and measurement workflows with minimal human oversight. Our operational identifiability theory provides a principled decision policy for robotic experiment design:
In emerging self-driving laboratories (SDLs) \citep{aspuru2020}, autonomous robotic agents execute synthesis and measurement workflows with minimal human oversight.  Our operational identifiability theory provides a principled decision policy for robotic experiment design: rather than collecting passive observations indiscriminately, an autonomous agent should actively schedule $T1$ replication and $T2$ contact operations whenever latent sensor drift or calibration degradation is detected.

\section*{Impact Statement}
This work provides an operational foundation for discovering physical representations from raw sensor streams.  By formalizing the necessary and sufficient experimental interventions required to identify thermodynamic coordinates, it reduces the risk of treating coordinate artifacts as genuine physical discoveries.

\FloatBarrier
\clearpage
\bibliographystyle{icml2026}
\bibliography{references}

\clearpage
\appendix
\renewcommand{\thetable}{A\arabic{table}}
\renewcommand{\thefigure}{A\arabic{figure}}
\setcounter{table}{0}
\setcounter{figure}{0}

\section{Proof details}
\label{app:proofs}

\subsection{Contrast identification and perturbation}

\begin{figure*}[t]
\centering
\begin{minipage}{0.48\textwidth}
\centering
\includegraphics[width=0.88\linewidth,height=0.82in,keepaspectratio]{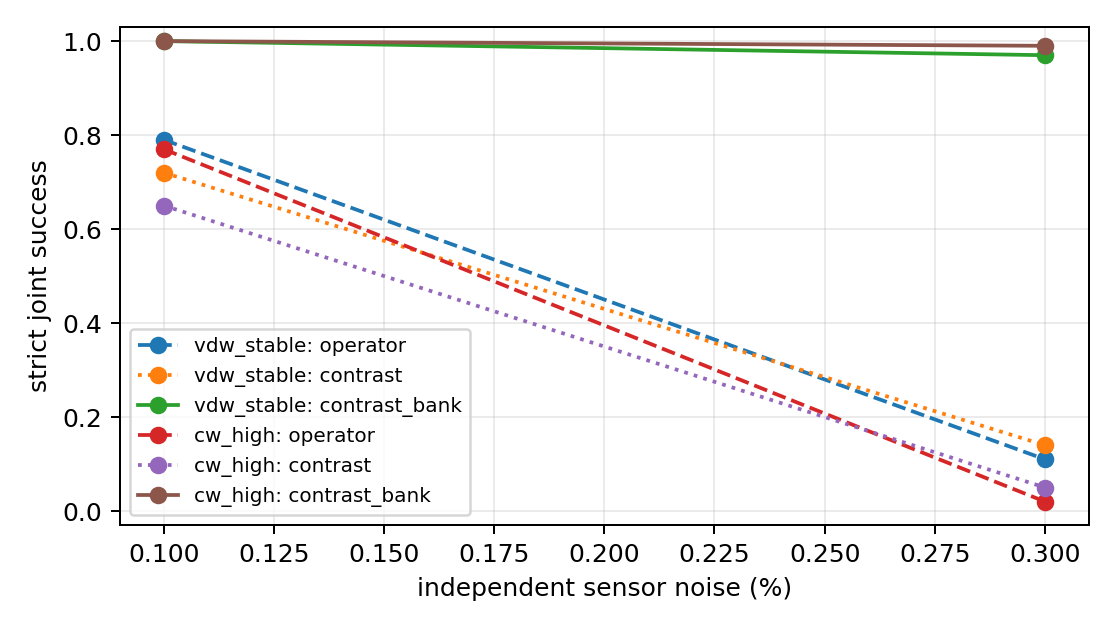}
\caption{Same-observation T1 A/B. The local contrast removes non-normal extensive error, while the disjoint broad bank supplies the missing intensive excitation.}
\label{fig:t1_ab}
\end{minipage}\hfill
\begin{minipage}{0.48\textwidth}
\centering
\includegraphics[width=0.88\linewidth,height=0.82in,keepaspectratio]{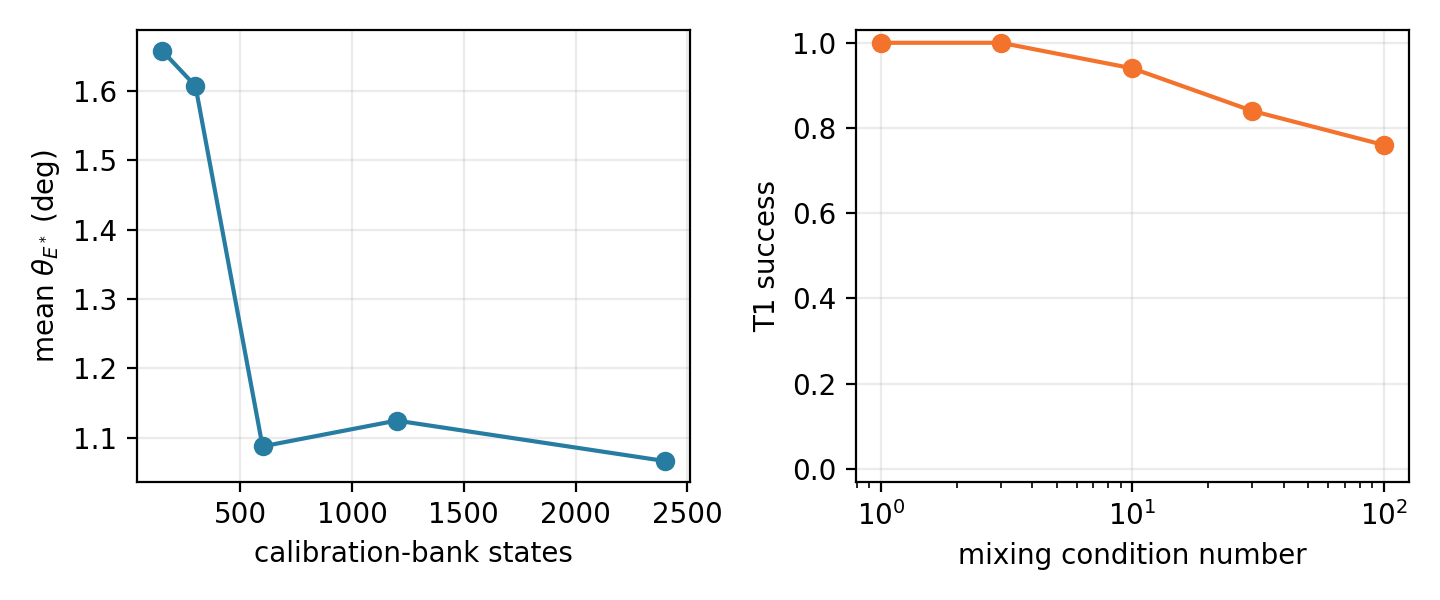}
\caption{T1 calibration size and mixing-condition sweeps at 0.3\% noise (50 seeds each). Conditioning dominates once broad excitation is present.}
\label{fig:t1_condition}
\end{minipage}
\end{figure*}

Partition $M=[M_E~M_I]$.  With no noise,
\[
\begin{aligned}
z_i^{(c)}-z_i&=(c-1)M_Ex_i,\\
cz_i-z_i^{(c)}&=(c-1)(M_I\lambda_i+b).
\end{aligned}
\]
Because $M$ is invertible, both blocks have column rank $d$.  Full affine span
of $x_i$ and $\lambda_i$ therefore yields exactly the two block images.  They
are complementary because their concatenation is $M$.  Any alternative
replication-invariant direct sum with eigenvalues $c$ and $1$ must equal these
two spectral subspaces, proving uniqueness.  A basis change within either
subspace leaves replication unchanged, giving the residual
$GL(d)\times GL(d)$ freedom.

For a noisy contrast matrix $A+N$, Wedin's sin-$\Theta$ theorem bounds the
estimated leading singular subspace by the perturbation norm divided by the
gap between the retained signal and discarded spectrum.  Since the exact
contrast has rank $d$, the relevant gap is $\sigma_d(A)-\norm{N}_2$, which gives
the bound quoted in the main text.  Correlated noise in the paired packets can
be handled by replacing $\norm{N}_2$ with the corresponding contrast covariance
bound; our experiments use independent channel noise.

\subsection{Homogeneous excitation}

Euler's theorem for a degree-one $S$ gives $x^T\nabla S(x)=S(x)$.  Differentiating
yields $H(x)x=0$.  Equivalently, differentiating $S(tx)=tS(x)$ with respect to
$x$ yields $\lambda(tx)=\lambda(x)$.  At $x_0+h\xi$,
\[
\lambda(x_0+h\xi)=\lambda(x_0)+hH_0\xi+\tfrac12h^2T_0[\xi,\xi]+O(h^3).
\]
The first-order centered data lie in $\operatorname{im}H_0$, of dimension at
most $d-1$.  Hence the first $d-1$ nonzero singular values are at most $O(h)$
and the last affine-completion direction is at most $O(h^2)$.  When the sampled
tangent covariance is nonsingular on $x_0^\perp$, $H_0$ has rank $d-1$, and the
projected quadratic term has a nonzero component outside $\operatorname{im}H_0$,
the corresponding lower bounds hold, giving the generic $\Theta(h)$ and
$\Theta(h^2)$ rates.

\subsection{Reciprocity calibration}

Let the two calibrated coordinates be related by $q_i'=h_i(q_i)$ with
$h_i'>0$.  Their Jacobians satisfy $Dq'=D_hDq$ where
$D_h=\operatorname{diag}(h_1',\ldots,h_d')$.  If both are symmetric, then for
every nonzero cross-response,
\[
h_i'(q_i)\,\partial_jq_i=h_j'(q_j)\,\partial_iq_j,
\]
and symmetry of $Dq$ cancels the cross-response.  Connectedness propagates
$h_i'(q_i)=h_j'(q_j)$ to every component.  Independent local variation lets
$q_i$ and $q_j$ change separately, so this shared value cannot depend on either;
it is a positive constant $a$.  Thus $h_i(t)=at+\beta_i$.  Returning from the
contact basis to an arbitrary extensive basis $A$ gives
$\lambda'=aA^{-T}\lambda+\beta$.

\subsection{Potential and matroid claims}

On a simply connected smooth chart, the Poincar\'e lemma makes local closure
equivalent to exactness.  The annulus violates simple connectivity.  Since
$d(d\theta)=0$ locally but its integral around winding $w$ is $2\pi w$, adding
$\kappa d\theta$ preserves local reciprocity and the Hessian of the concave
quadratic while changing the global cohomology class.

For T4, write $-H=B^TB$.  Then
$G_{II}=R_I^TB^TBR_I=(BR_I)^T(BR_I)$, whose determinant is positive exactly
when the columns $BR_I$ are independent.  These subsets are the vector matroid
of $BR$.  Under $x'=Ax$, $\lambda'=aA^{-T}\lambda+\beta$, and $R'=AR$,
$H'=aA^{-T}HA^{-1}$ and therefore $G'=aG$; all zero/nonzero principal
determinants are unchanged.

\section{Algorithms}
\label{app:algorithm}

For replication ($T1$),  Compute row matrices $E_c=(Z^{(c)}-Z)/(c-1)$ and
$Q_c=(cZ-Z^{(c)})/(c-1)$, center only $Q_c$, and return their leading $d$
right-singular subspaces.  The broad bank is discarded after this step.

For thermal contact ($T2$),  Project each port's equal-and-opposite displacement into the
recovered extensive subspace.  For ordinal score $s_j$, symmetric finite
perturbations estimate $\partial_ks_j$.  Parameterize
$g_j'(s_j)>0$ by positive values at empirical quantile knots with linear
interpolation.  Minimize
\[
\sum_{i,j<k}\left[g_j'(s_{ij})\partial_ks_{ij}
-g_k'(s_{ik})\partial_js_{ik}\right]^2
\]
plus a small second-difference penalty; fix one knot to remove common scale.
Integrate $g_j'$ to recover calibrated scores up to offsets.

For integrability and coupling ($T3/T4$),  Fit a local quadratic map from recovered $x$ to calibrated
$\lambda$.  Use the symmetric Jacobian for inertia and $G$ only after the
held-out antisymmetry test passes.  For coexistence, fit the leading extensive
path coordinate $t$ and compute the normalized plateau score
$\norm{d\widehat\lambda/dt}\operatorname{range}(t)/\norm{\bar\lambda}$.  Cyclic
violations use a threshold estimated from second differences, which remove a
locally affine physical trend without consulting injected noise.

\section{Thermodynamic generators}
\label{app:systems}

Generator formulae are never passed to inference.  With specific quantities
$u=U/N$, $v=V/N$, and $m=M/N$, we use
\begin{align*}
S_{\rm vdW}/N&=1.5\log(u+1/v)+\log(v-0.3),\\
S_{\rm CW}/N&=1.5\log(u+m^2/2)+s_{\rm spin}(m),
\end{align*}
where $s_{\rm spin}(m)=-\tfrac{1+m}{2}\log\tfrac{1+m}{2}-\tfrac{1-m}{2}\log\tfrac{1-m}{2}$.
We differentiate these expressions analytically to generate $\lambda$ and use
central differences only for evaluation Hessians.  Pre-registered vdW states
are stable ($T=1.2,v=1.2$), spinodal ($T=.9375,v=1.2$), and raw unstable
($T=.8,v=1.2$).  CW states are high-temperature symmetric, critical, raw
low-temperature symmetric, and stable broken-symmetry.

The Maxwell path at $T=.85$ satisfies equal endpoint pressure and equal-area conditions (liquid/gas volumes .506260 and 2.649608 at pressure .219321). Convexified mixtures interpolate extensive endpoints with common supergradient, while raw controls evaluate nonconcave analytic continuations.

\vspace{-2pt}\section{Additional results}\label{app:additional}

\Cref{tab:noise_curves} evaluates regular-state recovery across noise levels (100 seeds per cell). Recovery on vdW fluids and the CW magnet remains $\ge 97\%$ up to $0.3\%$ noise. \Cref{tab:cond_sweep} sweeps $\kappa(M)$ from 1 to 100; even at $\kappa=100$, success is $76\%$ and $\theta_{E^*}$ scales gracefully.

\begin{table}[b]
\caption{Strict broad-bank regular-state noise curves (100 seeds per cell).}
\label{tab:noise_curves}
\centering
\scriptsize
\setlength{\tabcolsep}{2pt}
\begin{tabular}{lrrrrrrr}
\toprule
noise (\%) & 0 & .1 & .3 & 1 & 3 & 5 & 10\\
\midrule
vdW & 1.00 & 1.00 & .97 & .44 & .03 & .00 & .00\\
CW  & .99 & 1.00 & .99 & .64 & .06 & .01 & .00\\
\bottomrule
\end{tabular}
\vskip 2pt
\caption{T1 conditioning sweep at 0.3\% noise, 1,200 calibration states, and 50 seeds.}
\label{tab:cond_sweep}
\setlength{\tabcolsep}{3.5pt}
\begin{tabular}{lrrrrr}
\toprule
$\kappa(M)$ & 1 & 3 & 10 & 30 & 100\\
\midrule
success & 1.00 & 1.00 & .94 & .84 & .76\\
mean $\theta_{E^*}$ & .35 & .48 & 1.23 & 2.50 & 5.28\\
\bottomrule
\end{tabular}
\end{table}

\section{Experimental Protocol and Data Isolation}
\label{app:audit}

The evaluation protocol enforces clear operational data separation across experimental stages:

{\vskip 2pt\centering\scriptsize\setlength{\tabcolsep}{2pt}
\begin{tabular}{p{.22\linewidth}p{.72\linewidth}}
\toprule
Stage & Variable Scope\\
\midrule
Inference Input & Anonymous sensor packets; replication contrasts; contact motions; order scores; path adjacency\\
Simulation Engine & Ground-truth coordinates; true entropy gradients; true mixing $M$ and Hessian $H$\\
Evaluation Metric & Subspace angles; oracle labels and true ensemble matroid $\mathcal{M}_{\mathrm{ens}}$\\
Data Independence & Calibration banks and phase paths generated from disjoint random seeds\\
\bottomrule
\end{tabular}\par\vskip 2pt}

The benchmark evaluates 800 local states, 1,200 calibration states, $\kappa(M)=10$, $c=2$, per-channel Gaussian noise, and 100 seeds with frozen thresholds. Phase evaluations record 401 coexistence states, 20,000 two-cycle comparisons, and 30 nonlinear-calibration seeds.

\section{Real-Fluid Benchmark Implementation Details}
\label{app:realdata}

The real-fluid dataset uses NIST Chemistry WebBook SRD 69 records (six saturation tables with 377 temperature rows; 30 isobar tables with 691 calibration states), retaining states where $T,P,u,v,h,s$ are strictly finite.

The following operational separation is enforced in code:
{\vskip 0pt\centering\scriptsize\setlength{\tabcolsep}{3pt}
\begin{tabular}{lll}
\toprule
Record & Allowed use & Prohibited use \\
\midrule
Isobar state & Unit gauge; T1 contrasts & Tie-line scoring \\
Saturation endpoints & T3 coexistence path & T1 calibration \\
ThermoML record & Uncertainty audit & Ontology score \\
Named $M$, $x$, $\lambda$ & Generation/evaluation & Estimator input \\
\bottomrule
\end{tabular}\par\vskip 1pt}

For each substance, saturation temperatures are split into five blocks, generating one central positive and one endpoint-mismatched negative path per block and sensor seed (100 positive and 100 negative trials per substance across 20 sensor seeds). Unit balancing uses the isobar bank ($x\sp{\prime}=Ax,\lambda\sp{\prime}=A^{-T}\lambda$). Metrics, Wilson intervals, and calibration ablations are archived in \texttt{results/real\_fluid\_results.json}. The ThermoML record (DOI 10.1021/je3001427) audits \texttt{CompositionAtPhaseEquilibrium} and uncertainties without passing values to the estimator.

Finally, regarding limitations, if the T2 contact graph disconnects, each connected component retains its own positive scale; splines evaluate finite differences, and rank decisions remain thresholded at finite noise while protocol matroids require reproducible reservoir directions.

\end{document}